\documentclass{article}


\usepackage[preprint]{neurips_2023}




\usepackage[utf8]{inputenc} 
\usepackage[T1]{fontenc}    
\usepackage{hyperref}       
\usepackage{url}            
\usepackage{booktabs}       
\usepackage{amsfonts}       
\usepackage{nicefrac}       
\usepackage{microtype}      
\usepackage{xcolor}         
\usepackage{amsmath}
\usepackage{amssymb}
\usepackage{mathtools}
\usepackage{amsthm}
\usepackage{verbatim}
\usepackage[capitalize,noabbrev]{cleveref}

\theoremstyle{plain}
\newtheorem{theorem}{Theorem}[section]

\newtheorem{lemma}[theorem]{Lemma}
\newtheorem{corollary}[theorem]{Corollary}
\theoremstyle{definition}
\newtheorem{definition}[theorem]{Definition}

\theoremstyle{remark}

\newcommand{\RR}{{\mathbb R}}

\newcommand{\EE}{\mathbb{E}}

\DeclareMathAlphabet{\altmathcal}{OMS}{cmsy}{m}{n}

\bibliographystyle{abbrvnat}
\renewcommand{\cite}[1]{\citep{#1}}  
\setcitestyle{authoryear,round,citesep={;},aysep={,},yysep={;}}

\title{DP-TBART: A Transformer-based Autoregressive Model for Differentially Private Tabular Data Generation}

%

\author{%
  Rodrigo Castellon\thanks{Work done as an intern at Bloomberg.} \\
  Department of Computer Science\\
  Stanford University\\
  Stanford, CA USA \\
  \texttt{rjcaste@stanford.edu} \\
  \And
  Achintya Gopal \\
  Bloomberg \\
  New York, NY USA \\
  \texttt{agopal6@bloomberg.net} \\
  \AND
  Brian Bloniarz \\
  Bloomberg \\
  San Francisco, CA USA \\
  \texttt{bbloniarz@bloomberg.net} \\
  \And
  David Rosenberg \\
  Bloomberg \\
  Toronto, ON Canada \\
  \texttt{drosenberg44@bloomberg.net} \\
}

\begin{document}

\maketitle

\begin{abstract}
The generation of synthetic tabular data that preserves differential privacy is a problem of growing importance.
While traditional marginal-based methods have achieved impressive results, recent work has shown that deep learning-based approaches tend to lag behind.
In this work, we present Differentially-Private TaBular AutoRegressive Transformer (DP-TBART), a transformer-based autoregressive model that maintains differential privacy and achieves performance competitive with marginal-based methods on a wide variety of datasets, capable of even outperforming state-of-the-art methods in certain settings.
We also provide a theoretical framework for understanding the limitations of marginal-based approaches and where deep learning-based approaches stand to contribute most.
These results suggest that deep learning-based techniques should be considered as a viable alternative to marginal-based methods in the generation of differentially private synthetic tabular data.
\end{abstract}

\section{Introduction}

Though sharing tabular data about individuals can be broadly beneficial (for medical research, for example), privacy concerns typically prevent such data sharing.
Differentially private synthetic data generation stands out as an appealing solution to this problem: it provides strong formal privacy guarantees and avoids having to anticipate the exact analysis a downstream analyst might want to conduct, while producing a synthetic dataset that ``looks like'' the real data from the perspective of an analyst.
This problem has received considerable attention from the research community, with a wide variety of approaches available in the literature.
Among these, approaches tend to fall into two categories:

\begin{itemize}
    \item \textit{Marginal-based approaches} make differentially private measurements of low-order marginals, typically represented as histograms, and then fit a generative model to these histograms.
    \item \textit{Deep learning-based approaches} are inspired by successes in domains such as images and text~\cite{li2021large} and train deep neural networks using differentially private optimizers such as DP-SGD~\cite{abadi2016deep} to directly fit the data.\footnote{In general, we follow the nomenclature set in~\citet{tao2021benchmarking}.
    For deep learning-based approaches, we deviate because of the recent development of non-GAN approaches.}
\end{itemize}

Recent works unambiguously establish marginal-based approaches as state-of-the-art, with performance far surpassing that of approaches using neural networks, which typically fail to adequately model even one-dimensional marginals~\cite{tao2021benchmarking,bowen2019comparative}.

Given the lack of success of deep learning relative to marginal-based approaches, we ask the following question: is this significant gap due to a fundamental inadequacy in the deep learning approach to the task? If not, how can we close this gap?

To answer this question, we present Differentially-Private TaBular AutoRegressive Transformer (DP-TBART), a transformer-based autoregressive model explicitly tailored for the tabular domain, and we show that this simple approach significantly bridges the gap on 10 separate datasets.
In addition to narrowing this gap, we provide preliminary results suggesting that this modeling approach serves as a promising direction for downstream use cases that require accurately modeling higher-order interactions among columns.

\section{Background \& Related Work}

\subsection{Differential Privacy}

Differential privacy (DP), first introduced by~\citet{dwork2006calibrating}, is a technique used to protect the privacy of individuals in data analysis by preventing attackers from inferring specific information from the data.
While many formulations of differential privacy exist, we choose to follow the standard formulation parameterized by $\epsilon$ and $\delta$, with lower values providing stronger privacy guarantees.
More formally, a differentially-private algorithm is defined as follows:

\begin{definition}
Let $\altmathcal{A} : \altmathcal{D} \rightarrow \altmathcal{S}$ be a randomized algorithm, and let $\epsilon > 0, \delta \in [0,1]$. We say that $\altmathcal{A}$ is $(\epsilon, \delta)$-differentially private if for any two neighboring datasets $D, D' \in \altmathcal{D}$ differing by a single record, we have that
$$
P[\altmathcal{A}(D)\in S]\leq e^\epsilon P[\altmathcal{A}(D')\in S] + \delta
$$
for all $S\subset\altmathcal{S}$.
\end{definition}

In this paper, we consider a ``record'' to be a row in a tabular dataset, but this varies depending on the use case.
Furthermore, we take ``differing by a single record'' to mean the addition/removal of an arbitrary record from the dataset (this is known as ``unbounded DP''), following the convention set in previous works~\cite{li2021large,mckenna2022aim}.

More recently, methods for training neural networks in a differentially private manner have been proposed~\cite{abadi2016deep,papernot2016semi}.
In this work, we focus on DP-SGD, a method that preserves privacy throughout training by clipping \& perturbing gradients.
The per-sample gradients are first projected down to a radius $C$ ball (with $C$ designated as a hyperparameter) in order to bound their sensitivities, and isotropic Gaussian noise is further added to ensure the privacy guarantee is satisfied.
The final gradient is obtained by taking the mean of the noised per-sample gradients.
Formally, the differentially private estimate of the gradient from a batch of $n$ examples with per-sample gradients $g_i$ is computed as

$$
g\coloneqq \frac1n \sum_{i=1}^n \left[\min\left(\frac{C}{||g_i||}, 1\right) \cdot g_i + \altmathcal{N}(0, C^2 \sigma^2 I) \right]
$$

where $\sigma$ is pre-computed by performing a binary search for the smallest suitable $\sigma$ that satisfies a desired $(\epsilon,\delta)$-DP guarantee for a training run with a set batch size and number of training steps.

\subsection{Tabular data generation with deep learning}

Many methods for generating tabular data with deep learning have been proposed.
Among these, some of the most popular approaches include GANs~\cite{xu2019modeling,zhao2021ctab,zhao2022ctab} and, more recently, autoregressive models~\cite{borisov2022language,canale2022generative}, diffusion models~\cite{kotelnikov2022tabddpm}, normalizing flows~\cite{lee2022differentially}, VAEs~\cite{xu2019modeling}, and MMD~\cite{harder2021dp,yang2023differentially}.

Unfortunately, while some of these works train with differential privacy, they often neglect to mention the state-of-the-art marginal-based approaches, giving the false impression that these deep learning approaches are state-of-the-art on the task.
In contrast, our paper explicitly compares these two approaches on a wide variety of datasets, shows that a performance gap between more recent deep learning methods and marginal-based methods still exists, and demonstrates that autoregressive models can bridge the gap.

Among the methods listed above, \cite{canale2022generative} is the most similar to our method, which presents a universal attention-based modeling strategy for generating synthetic hierarchical data.
However, despite the importance of differentially private tabular data generation, the paper presents only a single experiment that validates that their model can achieve competitive performance on this task on a single metric.
We expand on these preliminary results and demonstrate that a similar approach is applicable to a wider variety of datasets, and in some limited cases even outperforms state-of-the-art marginal-based methods.
Furthermore, we bolster our results with a theoretical framework that sheds light into the specific limitations of marginal-based methods, and \textit{why} deep learning-based methods can sometimes outperform them.
As such, this paper stands to be the first to provide a comprehensive comparison between deep learning-based methods and marginal-based methods on the task of differentially private tabular data generation.

\subsection{Tabular data generation with marginal-based approaches}

In general, marginal-based approaches make differentially-private measurements (e.g., column histograms)
and then fit these measurements with a model; both steps may each be performed exactly once~\cite{mckenna2019graphical} or, more often, interleaved many times~\cite{zhang2017privbayes,liu2021iterative,mckenna2022aim}.

Methods that fall under this approach distinguish themselves by (1) which measurements they make and (2) the model used.
For example, \citet{mckenna2019graphical} proposes using probabilistic graphical models (PGMs) to represent the distribution.
Other methods use Bayesian networks~\cite{zhang2017privbayes} or neural networks~\cite{liu2021iterative}.

The inherent weakness of these methods is that they access all information about the data distribution in the form of low-order marginals,
due to the computational complexity of the exponential scaling to high-order marginals.
This limitation forces marginal-based methods to prioritize simple correlations at the expense of more complex interactions among columns.
We make this intuition more concrete in~\cref{subsubsec:theoretical}.

\section{Method}

\subsection{Preprocessing}

Consistent pre-processing and assumptions about what is considered public or private information is crucial for fair comparisons between methods.
In this work, we include datasets with both numeric and categorical column types, and consider the column type (categorical or numerical) as well as their ``min'' and ``max'' values to be public (in other words, no DP budget needs to be used to compute max and min).

For the methods ``AIM''~\cite{mckenna2022aim} and ``DP-TBART'', we uniformly discretize the numeric columns into 100 bins using the publicly available min and max values.
Then, at inference time we invert the discrete values to the midpoint of their original bin edges, rounding to integer values if necessary.
For the method ``DPCTGAN''~\cite{rosenblatt2020differentially}, values are normalized based on the original min and max.

\subsection{Autoregressive Modeling}
\label{subsec:autoregressive}

We adopt a language modeling procedure (akin to many natural language modeling procedures~\cite{vaswani2017attention,radford2019language}) specifically tailored for the discrete tabular setting.
Let $K$ be the number of columns in the given dataset.
We factorize the probability distribution over a row $X = (X_1,X_2,\ldots,X_K)$ with the chain rule:

$$
P(X) = P(X_1) \prod_{k=2}^K P(X_k | X_{<k})
$$
Note that here, values in each column encode to tokens in a shared vocabulary $V$ while ensuring that no two columns share an encoded value.\footnote{Though we maintain this design choice for all datasets included in~\cref{tab:main_table}, we share tokens across columns for the Dyck and character-level datasets.}
As a consequence, we have that $|V| = \sum_i |V_i|$, where $V_i$ represents the token encodings for column $i$.
The model is defined as a function $f_\theta$ that takes a sequence in $V^k$ (with $0\leq k< K$) and outputs a vector of logits in $\RR^{|V|}$.
To force the model $f_\theta$ to represent a probability distribution over only valid tokens $P(X)$ and thus to output only valid samples, we post-process the model output at each column $i$ and mask out all elements irrelevant to column $i$.
That is, for some sequence $x$ of length $k$, our revised logit outputs are

$$
f'_\theta(x) \coloneqq f_\theta(x) \hspace{1pt} \odot \hspace{1pt} \mathbf{m}
$$

where $\odot$ represents the element-wise (Hadamard) product and $\mathbf{m}$ sets all elements between index $\sum_{i=1}^k |V_i|$ and $\sum_{i=1}^{k+1} |V_i|$ to one and $-\infty$ everywhere else.

We sample from trained models directly, without any further post-processing on the distribution.

\subsection{Implementation Details}

We adopt a three-layer transformer decoder-only architecture~\cite{vaswani2017attention} closely resembling DistilGPT-2 (hidden dimension $768$, $12$ attention heads) with learned positional encodings and train our models with DP-Adam, learning rate $3e-4$, and batch size $256$ for all of our experiments unless otherwise indicated.
During training, we randomly set aside 1\% of the data for validation and treat the rest as the training set.
For more information, see the Appendix.

\section{Experiments}

\subsection{Experimental Setup}

\subsubsection{Datasets}
We use 10 datasets from various sources for our experiments.
Please note that the columns designated as the ``Target Variable'' are \textit{only} used for evaluation purposes and are not given any special treatment during training.
In other words, the fact that a column is a target variable does not influence the training process.

\begin{table*}[t]
  \centering
  \begin{tabular}{lccccc}
    \toprule
    Dataset & \# of Rows & Categorical & Numeric & Problem & Target Variable \\
    \midrule
    Adult & 48,842 & 9 & 5 & Classification & income\textgreater50K \\ 
    King & 21,613 & 7 & 13 & Regression & price \\
    Insurance & 1,338 & 4 & 3 & Regression & charges \\
    Loan & 5,000 & 7 & 7 & Classification & Personal Loan \\ 
    Credit & 284,807 & 1 & 30 & Classification & Class \\ 
    Bank & 45,211 & 10 & 7 & Classification & y \\ 
    Census & 299,285 & 35 & 6 & Classification & income \\ 
    Car & 1,728 & 7 & 0 & Classification & class \\ 
    Mushroom & 8,124 & 23 & 0 & Classification & poisonous \\ 
    Poker Hands & 25,010 & 11 & 0 & Classification & 10 \\ 
    \bottomrule
\end{tabular}
    \caption{Summary of datasets used in our experiments.}
    \label{tab:dataset_table}
  \end{table*}

\subsubsection{Methods}

In addition to DP-TBART, we evaluate the following methods: \textbf{AIM}~\cite{mckenna2022aim}, the most recent marginal-based method; \textbf{DP-NTK}~\cite{yang2023differentially}, the most recent deep learning-based method; and \textbf{DP-MERF}~\cite{harder2021dp} and \textbf{DP-HFlow}~\cite{lee2022differentially} and \textbf{DPCTGAN}~\cite{rosenblatt2020differentially}, additional deep learning-based methods.

\subsubsection{Evaluation}

We evaluate using the following metrics, which are adapted from the Synthetic Data Vault (SDV)~\cite{7796926}.

\begin{itemize}
    \item \href{https://docs.sdv.dev/sdmetrics/metrics/metrics-glossary/kscomplement}{\textbf{Kolmogorov-Smirnov Test (KS)}}: For numeric columns, this metric computes the Kolmogorov-Smirnov statistic $KS$ (the maximum difference between the CDF of the synthetic and real columns) and returns $1 - KS$.
    This metric lies between $0$ and $1$, and higher is better.
    We average this value across all numeric columns.
    \item \href{https://docs.sdv.dev/sdmetrics/metrics/metrics-in-beta/cstest}{\textbf{Chi-Square Test (CS)}}: For categorical columns, this metric returns the $p$-value from a chi-square test that checks the null hypothesis that the synthetic data comes from the same distribution as the real data.
    We average this value across all columns.
    This metric lies between $0$ and $1$, and higher is better.
    \item \href{https://docs.sdv.dev/sdmetrics/metrics/metrics-in-beta/detection-single-table}{\textbf{Detection (DET)}}: We train an XGBoost model to differentiate between real and synthetic samples, and then report $1 - ROCAUC$, where $ROCAUC$ is computed as an average of the ROC-AUC scores across several folds.
    This metric lies between $0$ and $1$, and higher is better (we would like a classifier to \textit{not} be able to distinguish synthetic data from real data).
    \item \href{https://docs.sdv.dev/sdmetrics/metrics/metrics-in-beta/ml-efficacy-single-table/regression}{\textbf{ML Efficacy (Regression)}}: We fit a linear regression model to predict a target variable in the dataset on synthetic data and evaluate its performance on the real dataset.
    This metric returns the $r^2$ test score.
    This metric lies below $1$, and higher is better.
    \item \href{https://docs.sdv.dev/sdmetrics/metrics/metrics-in-beta/ml-efficacy-single-table/binary-classification}{\textbf{ML Efficacy (Classification)}}: We fit a decision tree to predict a target variable in the dataset on synthetic data and evaluate its performance on the real dataset.
    This metric returns the F1 test score.
\end{itemize}

Generally, we interpret the ``KS'' and ``CS'' metrics to capture how well a model is able to replicate low-order marginals and the ``DET'' and ``ML'' metrics to capture the model's ability to capture higher-order, more complex correlations.
We follow this rough intuition in our later analysis.

\subsection{Compute}

Our experiments were conducted on a DGX box with 8 NVIDIA Tesla V100 GPUs, with each individual model run executed on a single GPU.
Each model required roughly 20 minutes to 3 hours to train, depending on the input dataset size.
This resulted in a total compute usage of approximately 200 GPU hours for our main experiments.

\subsection{Benchmark}

\begin{table*}[t]
  \centering
  \small
  \begin{tabular}{lccccc}
    \toprule
    Method & KS $\uparrow$ & CS $\uparrow$ & DET $\uparrow$ & ML (CLF) $\uparrow$ & ML (REG) $\uparrow$ \\
    \midrule
    AIM & 0.886 $\pm$ 0.001 & 0.997 $\pm$ 0.001 & 0.260 $\pm$ 0.002 & 0.594 $\pm$ 0.002 & 0.132 $\pm$ 0.057 \\
    \midrule
    DP-TBART & \textbf{0.847 $\pm$ 0.001} & \textbf{0.978 $\pm$ 0.002} & \textbf{0.172 $\pm$ 0.004} & \textbf{0.554 $\pm$ 0.005} & \textbf{-0.264 $\pm$ 0.022} \\
    DPCTGAN & 0.591 $\pm$ 0.006 & 0.831 $\pm$ 0.01 & 0.031 $\pm$ 0.001 & 0.412 $\pm$ 0.012 & -2.83e4 $\pm$ 2.31e4 \\
    DP-MERF & 0.505 $\pm$ 0.006 & 0.532 $\pm$ 0.003 & 0.0 $\pm$ 0.0 & 0.491 $\pm$ 0.014 & -4.2e20 $\pm$ 2.5e20  \\
    DP-HFlow* & 0.676 $\pm$ 0.004 & 0.622 $\pm$ 0.008 & 0.034 $\pm$ 0.004 & 0.543 $\pm$ 0.019 & -0.603 $\pm$ 0.345 \\
    DP-NTK & 0.349 $\pm$ 0.001 & 0.438 $\pm$ 0.006 & 0.0 $\pm$ 0.0 & 0.502 $\pm$ 0.011 & -5.83e16 $\pm$ 4.76e16 \\
    \bottomrule
  \end{tabular}
    \caption{We evaluate AIM, DP-TBART, and four deep learning baselines against a variety of metrics, and report the average scores (and their standard errors) across all datasets. 
    Note that ML (REG) may take on negative values if the regressor's predictions perform worse on the real test data than a baseline that knows only the true mean.
    (*) For DP-HFlow, we re-implement this method as no code is publicly available.
    }
    \label{tab:main_table}
\end{table*}

As shown in~\cref{tab:main_table}, we substantially outperform all three deep learning baselines included in our study, reaching performance comparable with the leading marginal-based approach, AIM~\cite{mckenna2022aim}.

We use unbounded DP (following~\cite{li2021large,mckenna2022aim}) for all approaches, with a choice of $\epsilon=1$ and $\delta=1e-9$ for all datasets and approaches.

\subsection{Limitations of Marginal-based Approaches}

Though AIM outperforms all other methods on our benchmark (see~\cref{tab:main_table}), a key limitation is that since it measures only low-order marginals, it cannot access the complex higher-order correlations that deep learning-based methods can.
In this section, we will first propose a theoretical framework that precisely characterizes this limitation and then use this framework to find two real-world settings where marginal-based approaches fail to achieve good performance.

\subsubsection{Theoretical Framework}
\label{subsubsec:theoretical}

We provide a theoretical analysis of marginal-based methods, which currently perform well on standard benchmarks, but are limited by construction in their ability to encode joint distributions with multiple interactions between variables
Our information theoretical framework demonstrates this limitation and gives a closed-form expression for the model's divergence in the ideal setting of infinite data and privacy budget.

\paragraph{Preliminaries}
\label{par:preliminaries}

To begin, assume a data-generating process yields a discrete distribution $\bar{P}(X)$ defined on a sample space $\altmathcal{X} = (V_1,\ldots,V_K)$ with $V_i$ disjoint w.r.t. each other as defined in~\cref{subsec:autoregressive}.
We refer to $\bar{P}(X)$ as the \textit{true distribution}.

Next, a two-player adversarial game containing two agents, modeler and nature, ensues.
The modeler is given access to the set of all marginals of order $M$, i.e., $\bar{P}(X_{i_1},\ldots,X_{i_M})$ for all possible subsets $\{i_1,\ldots,i_M\} \subseteq \{1,2,\ldots,K\}$, but nothing else.
This is consistent with a setting in which there is infinite data and an infinite privacy budget.

The modeler must propose a distribution $Q_M(X_1,X_2,\ldots,X_K)$ such that all marginals match, i.e., $Q_M(X_{i_1},\ldots,X_{i_M}) = \bar{P}(X_{i_1},\ldots,X_{i_M})$ for all subsets $\{i_1,\ldots,i_M\}$.
We call the set of all distributions such that all marginals match $\Pi_M$.
Then, nature reacts to $Q_M$ and chooses $P_M\in\Pi_M$.

The modeler is asked to minimize the log loss $L(P_M, Q_M) = \EE_{P_M}[-\log Q_M(x)]$, and nature maximizes the same quantity.
We further assume that when nature encounters ties along level sets of $f(P_M) = L(P_M, Q_M)$, they are broken by deferring to maximizing the quantity $KL(Q_M || P_M)$ (but that any choice based on a further tie is arbitrary).

\paragraph{Results}

We first appeal to a set of results from~\citet{grunwald2004game}, which we restate as a single lemma using our notation.

\begin{lemma}
\label{lemma:1}
\cite{grunwald2004game}

\begin{enumerate}
    \item The modeler's optimal estimate is uniquely given by the maximum entropy distribution $Q_M^* = \arg\sup_{P\in\Pi_M} H(P)$, where $H$ is Shannon entropy.
    \item $L(P,Q_M^*) = L(P',Q_M^*) = H(Q_M^*)$ for all $P,P'\in\Pi_M$.
\end{enumerate}
\end{lemma}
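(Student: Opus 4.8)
The plan is to reduce both claims to a single algebraic identity, $L(P,Q) = H(P) + KL(P\|Q)$, valid for any two distributions on $\altmathcal{X}$, together with the fact that $\Pi_M$ is an affine (hence convex) set cut out by linear constraints on $P$. Since $\Pi_M$ is nonempty (it contains $\bar P$), closed and compact as a subset of the probability simplex over the finite space $\altmathcal{X}$, and $H$ is continuous and strictly concave, the maximizer $Q_M^* = \arg\sup_{P\in\Pi_M} H(P)$ exists and is unique; note that $Q_M^* \in \Pi_M$ by construction.

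First I would prove the second part, which is the engine for everything else. The constraints defining $\Pi_M$ are linear: they fix $\sum_{x : x_S = a} P(x) = \bar P(x_S = a)$ for every index set $S$ with $|S| = M$ and every value $a$. A Lagrange/KKT analysis of $\sup_{P\in\Pi_M} H(P)$ then shows the maximizer has exponential-family form $Q_M^*(x) \propto \exp\left(\sum_{S,a} \lambda_{S,a}\,\mathbf{1}[x_S = a]\right)$, so that $-\log Q_M^*(x) = -\log Z - \sum_{S,a} \lambda_{S,a}\,\mathbf{1}[x_S = a]$ is an affine function of the marginal indicator statistics. Taking expectations under any $P \in \Pi_M$ and using that every such $P$ has the same $M$-order marginals as $\bar P$, the quantity $\EE_P[-\log Q_M^*(x)]$ depends on $P$ only through those fixed marginals, and is therefore the same constant for all $P \in \Pi_M$. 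Evaluating at the admissible choice $P = Q_M^*$ identifies that constant as $H(Q_M^*)$, giving $L(P, Q_M^*) = H(Q_M^*)$ for every $P \in \Pi_M$.

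The first part then follows by a short minimax argument. For the lower bound, against any candidate $Q$ the modeler cannot do better than nature's response $P = Q_M^*$, so $\sup_{P\in\Pi_M} L(P,Q) \ge L(Q_M^*, Q) = H(Q_M^*) + KL(Q_M^* \| Q) \ge H(Q_M^*)$, with equality in the last step iff $Q = Q_M^*$ by nonnegativity and definiteness of $KL$. For the matching value, part 2 already gives $\sup_{P\in\Pi_M} L(P, Q_M^*) = H(Q_M^*)$. Hence $Q_M^*$ attains the minimax value $H(Q_M^*)$, while every $Q \neq Q_M^*$ has strictly larger worst-case loss, so $Q_M^*$ is the \emph{unique} optimal estimate.

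I expect the main obstacle to be the rigorous justification of the exponential-family form of $Q_M^*$: turning the heuristic Lagrangian stationarity condition into a genuine proof, handling the possibility that some $\bar P(x_S = a) = 0$ (a boundary point of the simplex, where the interior KKT argument breaks and one must pass to the closure or restrict to the support of $Q_M^*$), and confirming the constraint qualification needed for strong duality. Once the affine representation of $-\log Q_M^*$ is secured, the remainder is the bookkeeping above; in particular, nature's tie-breaking rule plays no role here, since part 2 shows that nature is in fact indifferent among all $P \in \Pi_M$ when faced with the optimal $Q_M^*$.
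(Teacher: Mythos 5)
The paper never proves this lemma; it is imported verbatim from \citet{grunwald2004game} (``we restate as a single lemma using our notation''), so there is no in-paper argument to compare against. Your proposal reconstructs what is essentially the standard proof of the maximum-entropy/robust-Bayes equivalence for linearly constrained families---the same equalizer-plus-minimax argument that underlies the cited source and Csisz\'ar's I-projection theory: first show that $-\log Q_M^*$ is affine in the constrained marginal statistics, so that $L(P,Q_M^*)$ is constant over $\Pi_M$ and, evaluating at $P=Q_M^*$, equals $H(Q_M^*)$; then defeat any other candidate $Q$ via nature's admissible response $P=Q_M^*$, using $L(Q_M^*,Q)=H(Q_M^*)+KL(Q_M^*\|Q)>H(Q_M^*)$. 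This is correct, including your observations that existence/uniqueness of $Q_M^*$ follow from compactness of $\Pi_M$ and strict concavity of $H$, and that nature's tie-breaking rule is irrelevant to this lemma (it only matters for the paper's Corollary).

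Two technical remarks. First, a harmless sign slip: with $Q_M^*(x)\propto\exp\bigl(\sum_{S,a}\lambda_{S,a}\mathbf{1}[x_S=a]\bigr)$ you should get $-\log Q_M^*(x)=\log Z-\sum_{S,a}\lambda_{S,a}\mathbf{1}[x_S=a]$, not $-\log Z-\sum_{S,a}\lambda_{S,a}\mathbf{1}[x_S=a]$. Second, the boundary issue you flag is indeed the only real gap, and the cleanest patch avoids KKT at the boundary entirely: because $\Pi_M$ is convex, if some $P\in\Pi_M$ put mass at a point $x$ with $Q_M^*(x)=0$, then $Q_t=(1-t)Q_M^*+tP\in\Pi_M$ and the one-sided derivative of $H(Q_t)$ at $t=0^+$ is $+\infty$ (the derivative of $-u\log u$ blows up at $u=0$), contradicting optimality of $Q_M^*$. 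Hence $\mathrm{supp}(P)\subseteq\mathrm{supp}(Q_M^*)$ for every $P\in\Pi_M$, so all losses $L(P,Q_M^*)$ are finite, and the Lagrangian/exponential-family representation can be carried out legitimately on the common support (in the relative interior of the affine hull). With that patch your argument is complete and matches the proof strategy of the reference the paper leans on.
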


These initial results provide a couple of key insights.
First, if a modeler acts optimally (in the sense that they pick the $Q_M$ that, under an adversarial response from nature, achieves the lowest log loss), they will choose the maximum entropy distribution out of all of their available options: this in fact mirrors the behavior of current state-of-the-art marginal-based methods~\cite{mckenna2019graphical,mckenna2022aim}.
Second, given an optimal modeler, the log loss is the same no matter what distribution nature chooses, and in fact the log loss is equal to the entropy of the modeler's guess.
This second insight will prove useful in the proof for our central result, which we now state.

\begin{theorem}
\label{thm:main}
    $KL(Q_M^*||P) = H(Q_M^*) - H(P)$ for any $P\in\Pi_M$.
\end{theorem}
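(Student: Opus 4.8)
The plan is to prove the identity directly from the log-loss decomposition of relative entropy, using part~2 of \Cref{lemma:1} as the single non-trivial input. The one point requiring care is the reading of the notation: here $KL(Q_M^*\|P)$ is the relative entropy of the true distribution $P$ with respect to the modeler's guess $Q_M^*$, so the expectation is taken under $P$, namely $KL(Q_M^*\|P)=\sum_x P(x)\log\frac{P(x)}{Q_M^*(x)}$. This is exactly the divergence tied to the game's log-loss $L(P_M,Q_M)=\EE_{P_M}[-\log Q_M]$, which is likewise averaged under the reference distribution; it is also the reading under which the right-hand side stays finite once $P$ has restricted support, since $Q_M^*$ is strictly positive by its log-linear form. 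I would state this convention up front so that the two arguments of the divergence never get conflated.

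First I would split the logarithm and identify the two resulting terms with the entropy and the log-loss already in play. Writing $KL(Q_M^*\|P)=\EE_P[\log P]-\EE_P[\log Q_M^*]$, the first term is $-H(P)$ by the definition of Shannon entropy, while the second is $-\EE_P[\log Q_M^*]=\EE_P[-\log Q_M^*]=L(P,Q_M^*)$, the log-loss of the modeler's distribution evaluated under $P$. Hence $KL(Q_M^*\|P)=L(P,Q_M^*)-H(P)$, which reduces the whole claim to evaluating the single cross-entropy term $L(P,Q_M^*)$.

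The crucial step is then to invoke \Cref{lemma:1}, part~2: for every $P\in\Pi_M$ we have $L(P,Q_M^*)=H(Q_M^*)$, a constant that does not depend on the choice of $P$. Substituting yields $KL(Q_M^*\|P)=H(Q_M^*)-H(P)$, as desired. I would also recall \emph{why} this constancy holds, since it is the heart of the argument: because $Q_M^*$ is the maximum-entropy member of $\Pi_M$, its log-density $\log Q_M^*$ is affine in a collection of sufficient statistics each supported on an order-$M$ subset of coordinates, so $\EE_P[-\log Q_M^*]$ depends on $P$ only through its order-$M$ marginals, and those are pinned to the marginals of $\bar{P}$ for every $P\in\Pi_M$.

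The main obstacle is conceptual rather than computational: one must verify that the cross-entropy term controlled by \Cref{lemma:1} is precisely $\EE_P[-\log Q_M^*]$ produced by the split above, and not the reversed expectation $\EE_{Q_M^*}[-\log P]$, which \Cref{lemma:1} does not govern and which would not collapse to a closed form. As an independent cross-check I would confirm the value through the information-projection viewpoint: $Q_M^*$ is the projection of the uniform distribution $U$ onto the linear family $\Pi_M$, so the Pythagorean relation holds with equality and, using that the relative entropy of any distribution from $U$ equals $\log|\altmathcal{X}|-H(\cdot)$, it collapses to $H(Q_M^*)-H(P)$ for the relative entropy of $P$ from $Q_M^*$, matching the log-loss computation.
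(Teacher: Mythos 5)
Your proof is correct and takes essentially the same route as the paper's: decompose the divergence as $KL(Q_M^*\|P)=L(P,Q_M^*)-H(P)$ and apply part~2 of \cref{lemma:1} to replace $L(P,Q_M^*)$ with $H(Q_M^*)$. Your explicit flagging of the notational convention---that the expectation in $KL(Q_M^*\|P)$ is taken under $P$, i.e., the arguments are reversed relative to the standard definition---is a point the paper leaves implicit, and your identification of it is exactly right, since the stated identity only holds under that reading.
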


\begin{proof}
By~\cref{lemma:1}, we know that $L(P, Q_M^*) = H(Q_M^*)$.
The KL divergence is thus

$$
KL(Q_M^*||P) = L(P,Q_M^*) - H(P) = H(Q_M^*) - H(P),
$$

proving the theorem.
\end{proof}

\cref{thm:main} gives a closed-form expression for the KL divergence between the modeler's estimated distribution and a hypothetical real distribution, expressed in terms of the entropy of the estimated distribution and the entropy of the real distribution.
\cref{thm:main} thus highlights when we should expect the estimate from a marginal-based method to be good or bad: is the real distribution high-entropy or low-entropy?
If the real distribution is high-entropy, then we should expect a marginal-based approach to achieve low KL divergence; on the other hand, if the real distribution is low-entropy, we should expect a marginal-based approach to provide a guess that is too ``smoothed out'', and far away from ground truth.
Furthermore, it grounds the intuition that the less information that is provided to the model ($M$ is small), the larger the KL divergence.
If less information is provided, then $H(Q_M^*)$ will be larger, and thus the KL divergence will also be larger.
We continue discussion of~\cref{thm:main} in~\cref{paragraph:dyck,subsubsec:low_privacy}.

With~\cref{thm:main} in hand, we can now quickly derive an impossibility result that makes the key limitation of marginal-based approaches precise.

\begin{corollary}
\label{corollary:1}
When $M<K$, $KL(Q_M^*||P_M) > 0$.
\end{corollary}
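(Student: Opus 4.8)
The plan is to reduce the strict inequality to a statement about the extreme points of $\Pi_M$ and then exhibit the non-degeneracy of $\Pi_M$ by an explicit perturbation. First I would invoke \cref{thm:main} to rewrite the target quantity as $KL(Q_M^*||P_M) = H(Q_M^*) - H(P_M)$, so that the claim $KL(Q_M^*||P_M) > 0$ becomes the purely entropic statement $H(P_M) < H(Q_M^*)$. Next I would pin down which $P_M$ nature actually selects. By \cref{lemma:1}(2) the log loss $L(P,Q_M^*) = H(Q_M^*)$ is constant over all $P\in\Pi_M$, so nature is always in the tie-breaking regime and maximizes $KL(Q_M^*||P)$; using \cref{thm:main} once more, $KL(Q_M^*||P) = H(Q_M^*) - H(P)$, so nature's maximization is equivalent to minimizing $H(P)$ over $\Pi_M$. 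Hence $P_M = \arg\min_{P\in\Pi_M} H(P)$ is the minimum-entropy member of $\Pi_M$, and by the uniqueness of the maximum-entropy distribution in \cref{lemma:1}(1) we have $H(P) < H(Q_M^*)$ for every $P \neq Q_M^*$. The whole corollary therefore collapses to a single combinatorial fact: when $M < K$, the feasible set $\Pi_M$ contains a distribution other than $Q_M^*$.

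To establish that $\Pi_M$ is not a singleton I would construct a nonzero marginal-preserving perturbation. Assuming each column admits at least two values, fix two distinct tokens $a_i, b_i \in V_i$ for every column $i$ and define the rank-one sign tensor $\Delta(x_1,\ldots,x_K) = \prod_{i=1}^K \phi_i(x_i)$, where $\phi_i(a_i) = 1$, $\phi_i(b_i) = -1$, and $\phi_i \equiv 0$ elsewhere. The key computation is that for any index subset $S$ with $|S| = M < K$, forming the order-$M$ marginal of $\Delta$ requires summing out at least one coordinate $j \notin S$, and $\sum_{x_j} \phi_j(x_j) = 1 - 1 = 0$ factors out of the product; hence every order-$M$ marginal of $\Delta$ (and in particular its total mass) vanishes. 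Choosing a reference distribution $P_0 \in \Pi_M$ that is strictly positive on the $2^K$ cells in $\{a_1,b_1\}\times\cdots\times\{a_K,b_K\}$, the two distributions $P_0 \pm t\Delta$ agree with $P_0$ on all order-$M$ marginals for small $t>0$ and are distinct, so at least one of them differs from $Q_M^*$; by \cref{lemma:1}(1) that one has strictly smaller entropy, whence $H(P_M) < H(Q_M^*)$ and thus $KL(Q_M^*||P_M) > 0$.

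The main obstacle I anticipate is the positivity requirement on the reference distribution $P_0$: the perturbation argument only stays inside the probability simplex if some element of $\Pi_M$ puts positive mass on all $2^K$ corner cells, which can fail when the true distribution $\bar{P}$ has degenerate support (e.g., if some order-$M$ marginal $\bar{P}(x_S)$ is exactly zero, forcing every member of $\Pi_M$ to vanish on the cells extending $x_S$). I would resolve this by taking $P_0 = Q_M^*$ and appealing to the Gibbs form of the maximum-entropy solution, which is strictly positive on the entire feasible support, and by choosing the tokens $a_i,b_i$ so that each selected corner cell has positive marginals; equivalently one restricts $\Delta$ to the support of $Q_M^*$ and verifies that at least one genuine degree of freedom survives whenever $M<K$ and the distribution is non-degenerate. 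Handling this support bookkeeping carefully --- rather than the entropy algebra, which is immediate from \cref{thm:main} and \cref{lemma:1} --- is where the real work lies.
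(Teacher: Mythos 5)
Your proposal follows the same skeleton as the paper's proof: apply \cref{thm:main} to reduce the claim to $H(P_M) < H(Q_M^*)$, observe via \cref{lemma:1} and nature's tie-breaking rule that $P_M$ is an entropy minimizer over $\Pi_M$, and use uniqueness of the maximum-entropy distribution to conclude --- so that everything rests on $\Pi_M$ containing more than one element. The difference is what happens at that last step: the paper simply \emph{asserts} that $M < K$ implies $|\Pi_M| \geq 2$, whereas you actually set out to prove it, via the rank-one sign tensor $\Delta = \prod_i \phi_i$ whose order-$M$ marginals all vanish (the computation that summing out any coordinate $j \notin S$ kills the product is correct), so that $P_0 \pm t\Delta$ are distinct members of $\Pi_M$ for small $t$. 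This extra work buys something real: it exposes that the assertion is not unconditional. As you note, the perturbation requires a member of $\Pi_M$ that is strictly positive on the chosen corner cells, and this can fail for degenerate $\bar{P}$. Your caveat is in fact not just an inconvenience for your particular construction but a genuine counterexample to the corollary as stated: if $\bar{P}$ is a point mass, or more generally has some column that is deterministic, then even the order-$1$ marginals pin down the joint, $\Pi_M$ is a singleton, $Q_M^* = P_M = \bar{P}$, and $KL(Q_M^*||P_M) = 0$ despite $M < K$. So the corollary implicitly needs a non-degeneracy hypothesis on $\bar{P}$ that neither the statement nor the paper's one-line proof makes explicit; your version is the more careful argument, since it both supplies the missing proof of the non-singleton claim in the non-degenerate case and identifies exactly where degeneracy breaks it.
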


\begin{proof}
When a marginal-based method is not given the full joint $\hat{P}(X_1,\ldots,X_K)$, then $|\Pi_M|\geq2$.
Since there is only one maximum entropy distribution $Q_M^*$ (\cref{lemma:1}), and since nature breaks ties by choosing the distribution with the smallest entropy (\cref{par:preliminaries,lemma:1}), then that means $H(Q_M^*) > H(P_M)$, which implies that $KL(Q_M^*||P_M) > 0$.
\end{proof}

Since marginal-based approaches learn distributions via incomplete information (i.e., low-order marginals), then even in the most ideal of settings (infinite data, privacy budget), they cannot distinguish between one of potentially many choices of the true distribution: this leads to an error that does not asymptotically vanish with more data or privacy budget.

In the next two sub-sections, we present two empirical settings that demonstrate this exact limitation of marginal-based approaches.

\subsubsection{Selected Datasets}

In this section, we present two datasets with complex joint interactions---\textit{Dyck-$k$} and WikiText-103~\cite{merity2016pointer}---in which DP-TBART is able to achieve better performance than AIM under tight privacy guarantees.
Both selected datasets are  ideal testbeds for our approach because they serve as \textit{worst-case} scenarios for the marginal-based approach.
Seen through the lens of~\cref{thm:main}, measuring only low-order marginals leads to an estimated distribution with significantly higher entropy than the ground truth distribution (which contains much redundancy): this leads to a large KL divergence.
Intuitively speaking, it is difficult for the marginal-based method to guess the higher-order correlations accurately, so we would expect its performance to be quite far from optimal.

\paragraph{Dyck Dataset}
\label{paragraph:dyck}

In this section, we compare DP-TBART to AIM on \textit{Dyck-$k$}, a family of datasets we define and artificially construct.\footnote{Note that in NLP, Dyck-$k$ may refer to the language of nested brackets of $k$ types.
We consider only languages with one bracket type.}
Informally speaking, \textit{Dyck-$k$} consists of all length $k$ strings that belong to the Dyck formal language: the set of all strings with well-matched parentheses.
Formally, we can define \textit{Dyck} as the set of all strings generated by the following context-free grammar:
\begin{align*}
X\mapsto &| \hspace{2pt} ( \hspace{2pt} X \hspace{2pt} ) \hspace{2pt} X \\
         &| \hspace{2pt} \epsilon,
\end{align*}

where $\epsilon$ is the empty string.

\textit{Dyck-$k$} is thus the set of all strings in \textit{Dyck} with length $k$.
To give a concrete example, ``$((()))$'' and ``$(()())$'' belong to \textit{Dyck-$6$}.
On the other hand, ``$)(())($'' does \textit{not} belong to \textit{Dyck-$6$}; it has length $6$ but does not have well-matched parentheses.

\begin{table}
  \centering
  \begin{tabular}{lc}
    \toprule
    Method & \% Valid $\uparrow$  \\
    \midrule
    AIM & 0.517  \\
    DP-TBART & \textbf{0.804} \\
    \bottomrule
  \end{tabular}
  \vspace{5pt}
    \caption{We evaluate DP-TBART against AIM on \textit{Dyck-$20$} and report its performance with validity as the metric.
    }
    \label{tab:dyck_table}
\end{table}

To study the performance of our approach, we construct a comprehensive dataset consisting of all strings in \textit{Dyck-$20$}.
Then, we evaluate each method by fitting it to the real dataset in a differentially private manner ($\epsilon = 1$ and $\delta = 1e-9$), sample from the fitted model, and measure its performance by counting the frequency with which it samples syntactically correct strings.

In~\cref{tab:dyck_table}, we show the performance of our approach along with AIM.
We find that our approach outperforms the competing marginal-based approach by a wide margin, achieving 80.4\% accuracy in sampling syntactically correct strings.

While \textit{Dyck-$k$} is an artifically-constructed dataset, it serves as a concrete empirical example of the limitations of marginal-based methods: even in the best of circumstances, they may be unable to capture the full complexity of a given joint distribution.

\paragraph{WikiText-103}

We also compare DP-TBART to AIM on WikiText-103~\cite{merity2016pointer}, a real-world text dataset of Wikipedia articles.
We split WikiText-103 into contiguous 50-character substrings and treat each substring as a row in our tabular dataset, where each column represents a distinct character index.

\begin{table*}[t]
  \centering
  \begin{tabular}{lc}
    \toprule
    Method & GPT-2 Perplexity $\downarrow$ \\
    \midrule
    AIM & 3948  \\
    DP-TBART & 2945 \\
    DP-TBART (no privacy) & 2334 \\
    \bottomrule
  \end{tabular}
    \caption{
    We compare perplexities of outputs from DP-TBART with outputs from AIM on the WikiText-103 dataset as evaluated by GPT-2.
    }
    \label{tab:wikitext_table}
\end{table*}

We evaluate each model by fitting to the processed dataset, sampling, and then taking the average of the perplexity of the sampled rows computed with GPT-2 ~\cite{radford2019language}.
In~\cref{tab:wikitext_table}, we show the performance of our approach, along with AIM.
We find that our approach substantially outperforms AIM, achieving 25\% lower perplexity, which suggests that the full joint distribution of characters is better captured by our approach than by AIM.

\subsubsection{Low Privacy Setting}
\label{subsubsec:low_privacy}

As pointed out in~\cref{corollary:1}, even under the most generous assumptions (i.e., perfect marginal measurements, infinite data), marginal-based methods can fail to accurately capture the target distribution.
On the other hand, deep learning-based methods have no such intrinsic limitation.
In this section, we analyze what happens in practice when we approach this ideal noiseless setting by letting the privacy budget grow to $\epsilon=100$ or greater.

In~\cref{fig:low_privacy}, we show the performance of our approach and the leading marginal-based approach~\cite{mckenna2022aim} when varying $\epsilon$ on the Adult dataset.
We find that while both approaches benefit from the increased privacy budget, our approach consistently outperforms the marginal-based approach across values of $\epsilon$ exceeding $1000$ on the DET metric, suggesting that our approach is able to better capture higher-order correlations in the data when the privacy budget is relaxed.

\begin{figure}
    \centering
    \includegraphics[width=0.4\linewidth]{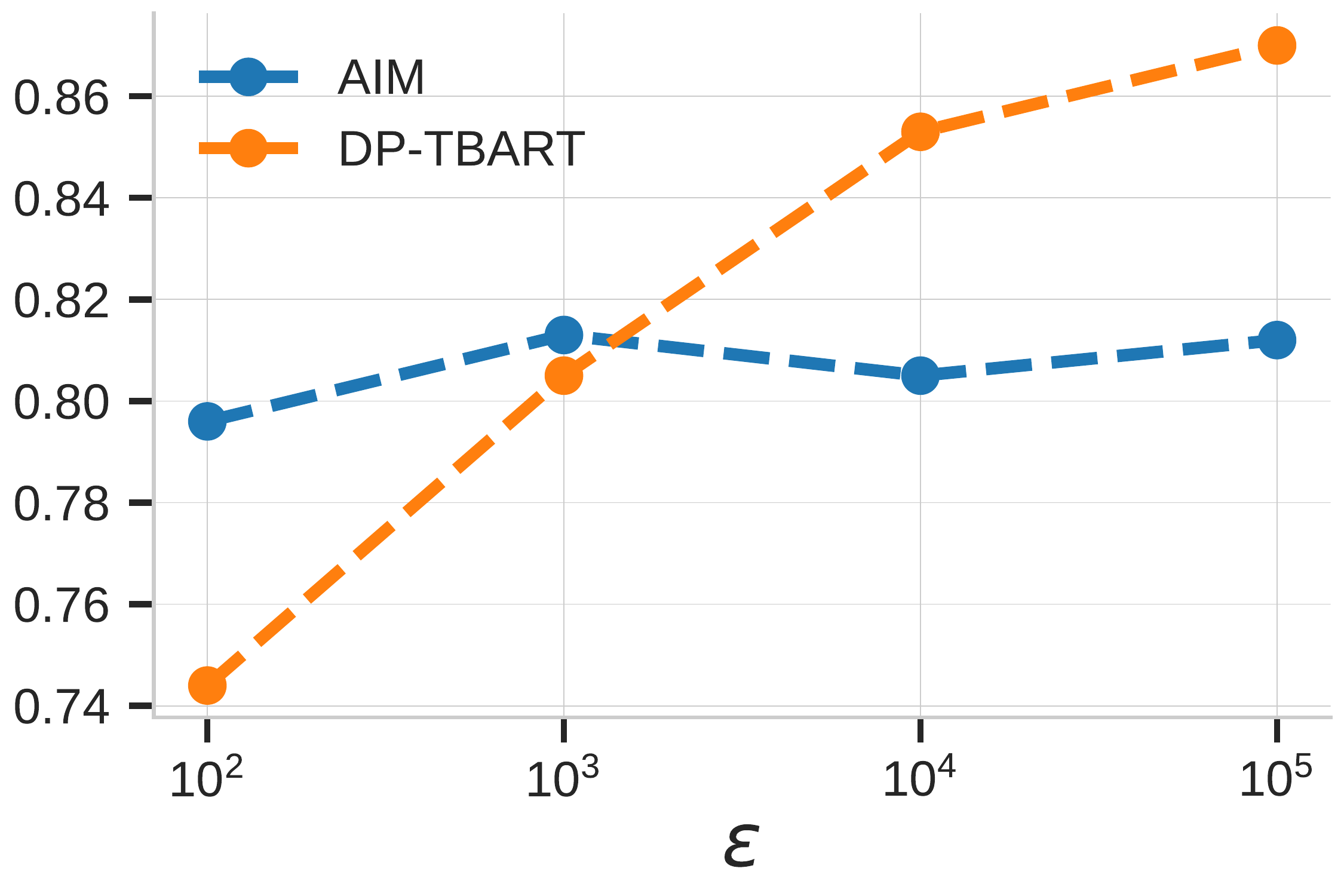}
    \caption{We show the performance (DET metric) of DP-TBART and AIM when varying $\epsilon$ on the Adult dataset.}
    \label{fig:low_privacy}
    \vspace{-10pt}
\end{figure}

\section{Discussion}

\subsection{Deep Learning for Synthetic Data}

Though synthetic data has been studied in the deep learning community, many studies overlook privacy concerns by failing to implement differential privacy, leading to potential leakage of private data~\cite{xu2019modeling,zhao2021ctab,zhao2022ctab,borisov2022language,kotelnikov2022tabddpm}.
Additionally, these studies have also often neglected to compare their methods with state-of-the-art approaches (i.e., marginal-based).

In contrast, our work is the first to evaluate a deep learning method against the state-of-the-art across several datasets under strict privacy guarantees.
We contextualize our results with further analysis, providing the first theoretical and empirical insights into the relative strengths of marginal-based methods and deep learning-based methods for synthetic tabular data generation.

\subsection{The promise of deep learning-based approaches}

Though DP-TBART does not outperform AIM on standard benchmarks, our auxiliary results suggest that deep learning-based methods can fill an important niche that marginal-based methods are not able to.
We are optimistic that, through further investigation, deep learning-based approaches have the potential to surpass marginal-based methods in a broader range of scenarios or even pave the way for a hybrid method that incorporates the strengths of both approaches.

\section{Conclusion}

In this paper, we have presented an approach for differentially private synthetic tabular data generation based on deep learning.
We have shown that, in certain settings, our deep learning-based approach is able to outperform state-of-the-art marginal-based approaches.
We further provide a theoretical framework that formalizes the limitations of marginal-based methods and two empirical settings that demonstrate these limitations in action.
Finally, we discuss the potential of deep learning-based approaches for further exploration.

We believe that our work serves as a promising starting point for understanding the utility of deep learning-based approaches for generating synthetic data under privacy guarantees, and hope that our results may inspire further exploration in this area.

\subsection{Limitations \& Broader Impact}

Though our theoretical framework allows us to understand the limitations of different modeling strategies, it also rests on assumptions that are not satisfied in practice, e.g., infinite data and noiseless marginal measurements.
Despite these unrealistic assumptions, we believe that the framework is intuitive enough to provide insight into practical settings (and confirm this with experiments, see~\cref{subsubsec:low_privacy}).
As for DP-TBART itself, it currently handles only discrete (or discretized) data, underperforms relative to the state-of-the-art marginal-based method, and requires GPU hardware to run efficiently.
Despite row-level differential privacy guarantees, outputs from DP-TBART could still reflect underlying biases in the training data, which could lead to harm.

\bibliography{main}


\newpage
\appendix
\onecolumn

\section{Additional possible baselines}

Despite the existence of CTAB-GAN+~\cite{zhao2022ctab}, a more recent deep learning approach that claims differential privacy guarantees, we did not evaluate against this model.

The original paper does not contain a complete proof that it satisfies differential privacy, and in fact there is reason to believe that it does not.
The pre-processing pipeline is deterministic, even for continuous and mixed-type columns, for which it learns a Gaussian mixture.
Furthermore, the privacy budget calculation remains the same even when both CTAB-GAN+ performs a modified batch sampling scheme that upweights infrequent rows (training-by-sampling) \textit{and} takes gradient steps w.r.t. information loss, which is a deterministic function of real samples.

Note that while our paper does not contain an explicit proof of privacy, we have introduced no new mechanisms (such as training-by-sampling or modified batch sampling above), and thus the proof in~\citet{abadi2016deep} remains valid for our algorithm.

\section{Dataset sources and provenance}

For ease of reproducibility and building off of our work, we will describe the provenance for our datasets: where they came from and what pre-processing steps we performed (if any).

\subsection{Adult}

\begin{itemize}
    \item Link: \href{https://github.com/ryan112358/private-pgm/blob/master/data/adult.csv}{https://github.com/ryan112358/private-pgm/blob/master/data/adult.csv}
    \item Provenance: We downloaded the above CSV file without any further pre-processing.
    \item Description: This dataset is originally sourced from the \href{https://archive.ics.uci.edu/ml/datasets/adult}{UCI Machine Learning repository}~\cite{Dua:2019}, but has been discretized in the process.
    The dataset contains personal attributes of U.S. residents, sampled in a stratified manner. The data was extracted from the 1994 U.S. Census.
    \item Licensing: Apache License 2.0
\end{itemize}

\subsection{King}

\begin{itemize}
    \item Link: \href{https://github.com/Team-TUD/CTAB-GAN-Plus/blob/main/Real\_Datasets/king.csv}{https://github.com/Team-TUD/CTAB-GAN-Plus/blob/main/Real\_Datasets/king.csv}
    \item Provenance: We downloaded the above CSV file without any further pre-processing.
    \item Description: The King dataset was originally obtained from \href{https://www.kaggle.com/datasets/harlfoxem/housesalesprediction}{this Kaggle dataset} and contains house attributes (sqft, \# of bathrooms, etc.) labeled with their sale price between 2014 and 2015 for King County, Washington.
    \item Licensing: CC0: Public Domain
\end{itemize}

\subsection{Insurance}

\begin{itemize}
    \item Link: \href{https://www.kaggle.com/datasets/mirichoi0218/insurance}{https://www.kaggle.com/datasets/mirichoi0218/insurance}
    \item Provenance: We downloaded `insurance.csv' without any further pre-processing
    \item Description: The Insurance dataset contains personal attributes (age, etc.) and U.S. medical insurance costs.
    \item Licensing: Database Contents License
\end{itemize}

\subsection{Loan}

\begin{itemize}
    \item Link: \href{https://www.kaggle.com/datasets/itsmesunil/bank-loan-modelling}{https://www.kaggle.com/datasets/itsmesunil/bank-loan-modelling}
    \item Provenance: We downloaded `Bank\_Personal\_Loan\_Modelling.xlsx' and processed it into a CSV.
    \item Description: The Loan dataset contains personal attributes from bank customers and a label (whether they converted from a depositor to loan customer).
    \item Licensing: CC0: Public Domain
\end{itemize}

\subsection{Credit}

\begin{itemize}
    \item Link: \href{https://www.kaggle.com/mlg-ulb/creditcardfraud}{https://www.kaggle.com/mlg-ulb/creditcardfraud}
    \item Provenance: We downloaded the `creditcard.csv' with a Kaggle account with no additional pre-processing.
    \item Description: The Credit dataset contains 30 numerical attributes describing credit card transactions along with a label for whether they were fraudulent.
    \item Licensing: Database Contents License
\end{itemize}

\subsection{Bank}

\begin{itemize}
    \item Link: \href{https://archive.ics.uci.edu/ml/datasets/Bank+Marketing}{https://archive.ics.uci.edu/ml/datasets/Bank+Marketing}
    \item Provenance: We downloaded `bank-full.csv' from the \href{https://archive.ics.uci.edu/ml/datasets/adult}{UCI Machine Learning repository}~\cite{Dua:2019} with no additional pre-processing.
    \item Description: The Bank dataset~\cite{moro2014data} is derived from marketing campaigns from a bank. It contains personal attributes about a client and whether the client has subscribed to a product (bank term deposit).
    \item Licensing: Creative Commons Attribution 4.0 International
\end{itemize}

\subsection{Census}

\begin{itemize}
    \item Link: \href{https://archive.ics.uci.edu/ml/datasets/Census+Income}{https://archive.ics.uci.edu/ml/datasets/Census+Income}
    \item Provenance: We downloaded `census.tar.gz' from the \href{https://archive.ics.uci.edu/ml/datasets/adult}{UCI Machine Learning repository}~\cite{Dua:2019} and extracted it.
    Then, we combined census-income.data and census-income.test by concatenating both files together.
    \item Description: The Census dataset contains demographic and employment-related personal attributes from the U.S. Census Bureau, where each row corresponds to a stratified sample from the U.S. population.
    \item Licensing: Creative Commons Attribution 4.0 International
\end{itemize}

\subsection{Car}

\begin{itemize}
    \item Link: \href{https://archive.ics.uci.edu/ml/datasets/Car+Evaluation}{https://archive.ics.uci.edu/ml/datasets/Car+Evaluation}
    \item Provenance: We downloaded `car.data' from the \href{https://archive.ics.uci.edu/ml/datasets/adult}{UCI Machine Learning repository}~\cite{Dua:2019} and performed no further pre-processing.
    \item Description: The Car dataset has only categorical columns and each row describes attributes of a different car model.
    \item Licensing: Creative Commons Attribution 4.0 International
\end{itemize}

\subsection{Mushroom}

\begin{itemize}
    \item Link: \href{https://archive.ics.uci.edu/ml/datasets/Mushroom}{https://archive.ics.uci.edu/ml/datasets/Mushroom}
    \item Provenance: We downloaded `agaricus-lepiota.data' from the \href{https://archive.ics.uci.edu/ml/datasets/adult}{UCI Machine Learning repository}~\cite{Dua:2019} and performed no further pre-processing.
    \item Description: The Mushroom dataset contains descriptions of hypothetical samples corresponding to 23 species of mushrooms.
    \item Licensing: Creative Commons Attribution 4.0 International
\end{itemize}

\subsection{Poker Hands}

\begin{itemize}
    \item Link: \href{https://archive.ics.uci.edu/ml/datasets/Poker+Hand}{https://archive.ics.uci.edu/ml/datasets/Poker+Hand}
    \item Provenance: We downloaded `poker-hand-training-true.data' from the \href{https://archive.ics.uci.edu/ml/datasets/adult}{UCI Machine Learning repository}~\cite{Dua:2019} and performed no further pre-processing.
    \item Description: The Poker Hands dataset contains hands of five playing cards and their corresponding class (in Poker).
    \item Licensing: Creative Commons Attribution 4.0 International
\end{itemize}

\begin{figure*}[t]
    \centering
    \includegraphics[width=0.9\linewidth]{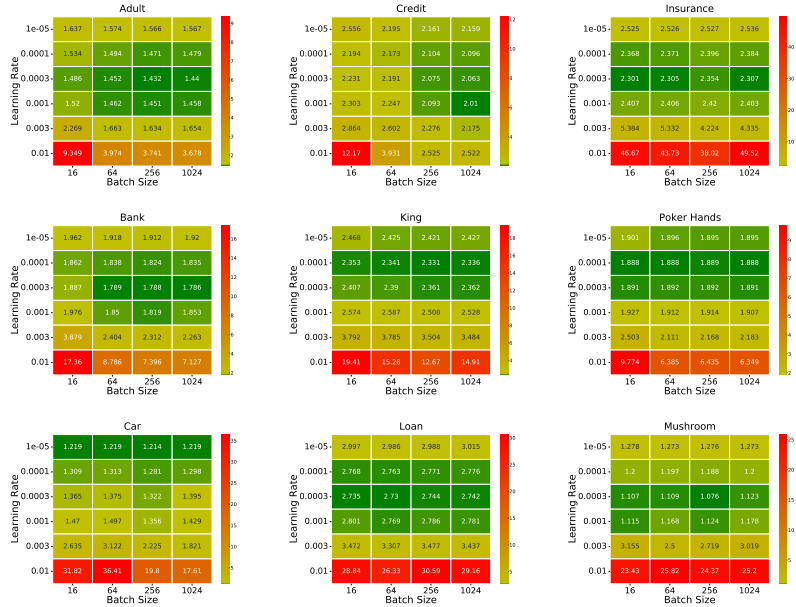}
    \caption{We train models with varying learning rates and batch sizes and evaluate the validation loss of each trained model.}
    \label{fig:lr_batch_figure}
    \vspace{-10pt}
\end{figure*}

\section{Hyperparameter Study}

Given observations from past work from image and text domains that networks trained with differential privacy are sensitive to hyperparameters~\cite{papernot2019making,li2021large,de2022unlocking}, we investigate whether the same phenomena occur in the tabular domain as well.

\paragraph{Learning Rate \& Batch Size}

Our learning rate and batch size sweeps reveal behaviors that mirror those in networks trained on images~\cite{kurakin2022toward,de2022unlocking} or text~\cite{li2021large}: performance is highly sensitive to learning rate and batch size, and increasing batch size almost never hurts.
See~\cref{fig:lr_batch_figure} for a visualization of our results.

\paragraph{Clipping Norm}

We further study the behavior of the network under various clipping norms and precisely explain the underlying mechanism that causes the observed behavior.
As shown in~\cref{fig:clipnorm}, we find that the network behaves similarly to what has been empirically observed in images~\cite{kurakin2022toward} and text~\cite{li2021large}: decreasing the clipping norm until all gradients are clipped results in the best performance.
Unlike prior work, however, we continue to evaluate the model at smaller and smaller values of $C$, and observe that the performance of the network actually stays constant and then begins to \textit{rapidly degrade}.

While the plateauing in performance is explained by (1) the fact that Adam is gradient scale-invariant (i.e., scaling each gradient $g$ by some scalar does not impact parameter updates) and (2) the fact that all gradients are clipped below $C=1$, this does not adequately explain the rapid degradation of performance when $C$ reaches $10^{-5}$.

While it may be tempting to conclude that such significant degradation of performance is due to floating point imprecision, it turns out that the answer is much more subtle.
Adam computes the effective step $\Delta_t = \alpha \cdot \hat{m}_t / (\sqrt{\hat{v}_t} + \epsilon)$ where $\epsilon=1e-8$ in practice and $\hat{m}_t$ and $\sqrt{\hat{v}_t}$ are numerical estimates proportional to clipped gradient magnitude.
When the gradient magnitudes reach the same order of magnitude as $\epsilon$, the denominator of the fraction becomes dominated by $\epsilon$ and training begins to falter, as seen in~\cref{fig:clipnorm}.

\begin{figure}
    \centering
    \includegraphics[width=0.4\linewidth]{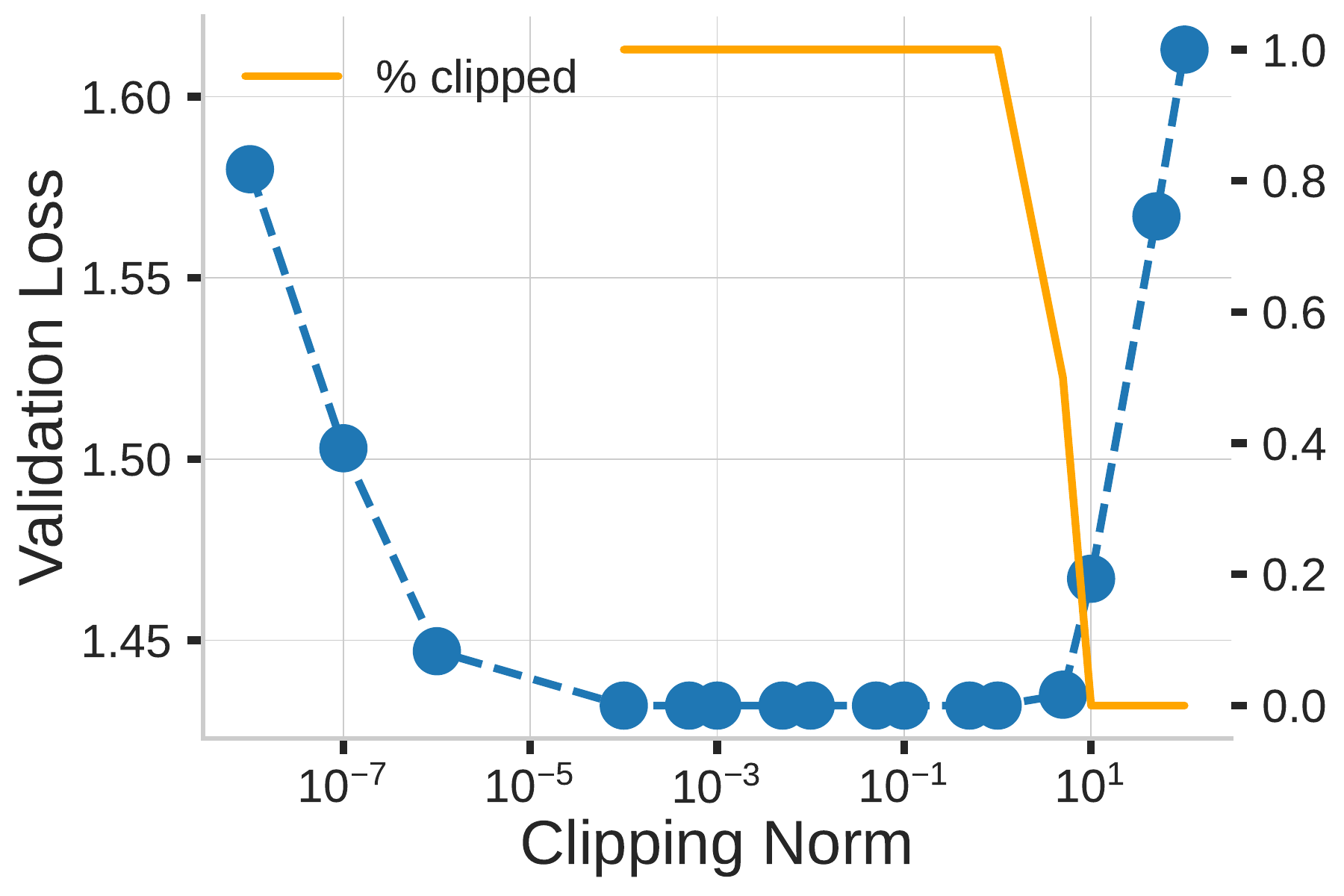}
    \caption{We show the impact of clipping norm on network performance on the Adult dataset.}
    \label{fig:clipnorm}
    \vspace{-10pt}
\end{figure}

\section{Software}

For model implementation and training, we use Pytorch~\cite{PaszkePyTorchAnImperative2019} (BSD-3) and Lightning~\cite{FalconPyTorchLightning2019} (Apache License 2.0).
For experiment tracking, we use Aim~\cite{ArakelyanAim2020} (Apache License 2.0).
For our transformer model, we adapted \href{https://huggingface.co/distilgpt2}{DistilGPT-2}~\cite{sanh2019distilbert} from Hugging Face Transformers~\cite{WolfTransformersStateoftheArtNatural2020} (Apache License 2.0).

For our DP-SGD implementation, we use \href{https://github.com/lxuechen/private-transformers}{private-transformers} (Apache License 2.0).

\section{Poisson sampling}

While some previous works sample batches non-privately by shuffling the training dataset and picking uniform batches~\cite{li2021large,tramer2020differentially}, we ensure end-to-end differential privacy guarantees by implementing Poisson sampling.
Since the Binomial distribution $B(n,p)$ concentrates tightly around its mean when $n$ is large and $p$ is small, we observe that out-of-memory issues do not occur in practice, and micro-batching was not required.

\section{Column Ordering}

One might observe that an autoregressive approach introduces an additional high-dimensional hyperparameter: column ordering.
In our experiments, we attempted several strategies (ordering columns from ones with largest cardinalities to smallest, smallest to largest, and toposorting a graph learned via structure learning), but observed no consistent impact on performance.

\end{document}